\newcommand{\argmin}{\mathrm{argmin}}
\newcommand{\argmax}{\mathrm{argmax}}
\newcommand{\rank}{\mathrm{rank}}
\newcommand{\range}{\mathrm{Range}}
\newcommand{\mat}[1]{\mathbf{#1}}
\newcommand{\vect}[1]{\mathbf{#1}}
\newcommand{\norm}[1]{\left\|#1\right\|}
\newcommand{\expect}{\mathbb{E}}
\newcommand{\prob}{\mathbb{P}}
\newcommand{\relu}[1]{\sigma\left(#1\right)}
\newcommand{\vectorize}[1]{\text{vect}\left(#1\right)}
\newcommand{\nullspace}{\mathcal{N}}
\newtheorem{thm}{Theorem}[section]
\newtheorem{cor}{Corollary}[section]
\newtheorem{property}{Property}[section]
\newtheorem{defn}{Definition}[section]
\icmltitlerunning{Overparametrization Helps Optimization and Generalizes Well}
\begin{document}
	
	\twocolumn[
	\icmltitle{On the Power of Over-parametrization in \\Neural Networks with Quadratic Activation} 
	\ifdefined\usebigfont
	\onecolumn
	\fi
	
	
	
	\begin{icmlauthorlist}
		\icmlauthor{Simon S. Du}{ml}
		\icmlauthor{Jason D. Lee}{usc}
	\end{icmlauthorlist}
	
	\icmlaffiliation{ml}{Machine Learning Department, Carnegie Mellon University}
	\icmlaffiliation{usc}{Department of Data Sciences and Operations, University of Southern California}

	\icmlcorrespondingauthor{Simon S. Du}{ssdu@cs.cmu.edu}

	\icmlkeywords{over-parameterization, generalization}
	
	\vskip 0.3in
	]
	
	
	
	\printAffiliationsAndNotice{}  

\begin{abstract}
	\label{sec:abs}
	We provide new theoretical insights on why over-parametrization is effective in learning neural networks.
For a $k$ hidden node shallow network with quadratic activation and $n$ training  data points, we show as long as $ k \ge \sqrt{2n}$, over-parametrization enables local search algorithms to find a  \emph{globally} optimal solution for general smooth and convex loss functions.
Further, despite that the number of parameters may exceed the sample size, using theory of Rademacher complexity, we show with weight decay, the solution also generalizes well if the data is sampled from a regular distribution such as Gaussian.
To prove when $k\ge \sqrt{2n}$, the loss function has benign landscape properties, we adopt an idea from smoothed analysis, which may have other applications in studying loss surfaces of neural networks.
\end{abstract}

\section{Introduction}
\label{sec:intro}
Neural networks have achieved a remarkable impact on many applications such computer vision, reinforcement learning and natural language processing.
Though neural networks are successful in practice, their theoretical properties are not yet well understood.
Specifically, there are two intriguing empirical observations that existing theories cannot explain.
\begin{itemize}
\item \textbf{Optimization}: Despite the highly non-convex nature of the objective function, simple first-order algorithms like stochastic gradient descent are able to minimize the training loss of neural networks.
Researchers have conjectured that the use of  over-parametrization~\citep{livni2014computational,safran2017spurious} is the primary reason why local search algorithms can achieve low training error.
The intuition is over-parametrization alters the loss function to have a large manifold of globally optimal solutions, which in turn allows local search algorithms to more easily find a global optimal solution.
\item \textbf{Generalization}: 
From the statistical point of view, over-parametrization may hinder effective generalization, since it greatly increases the number of parameters to the point of  having number of parameters exceed the sample size. To address this, practitioners often use explicit forms of regularization such as weight decay, dropout, or early stopping to improve generalization. However in the non-convex setting, theoretically, we do not have a good quantitative understanding on how these regularizations help generalization for neural network models.
\end{itemize} 

In this paper, we provide new theoretical insights into the optimization landscape and generalization ability of over-parametrized neural networks.
Specifically we consider the neural network of the following form:\begin{align}
f(\vect{x},\mat{W}) = \sum_{j=1}^{k}a_i\relu{\langle\vect{w}_j,\vect{x}\rangle}.\label{eqn:architecture}
\end{align}
In the above $\vect{x} \in \mathbb{R}^d$ is the input vector, $\mat{W} \in \mathbb{R}^{d \times k}$ with $\vect{w}_j \in \mathbb{R}^d$ denotes the $j$-th row of $\mat{W}$ and $a_i$'s are the weights in the second layer.
Finally $\relu{\cdot} : \mathbb{R} \rightarrow \mathbb{R}$ denotes the activation function applied to each hidden node.
When the neural network is  over-parameterized, the number of hidden notes $k$ can be very large compared with input dimension $d$ or the number of training samples.

In our setting, we fix the second layer to be $\vect{a} = \left(1,\ldots,1\right)$.
Although it is simpler than the case where the second layer is not fixed, the effect of over-parameterization can be studied in this setting as well because we do not have any restriction on the number of hidden nodes.

We focus on quadratic activation function $\relu{z}=z^2$.
Though quadratic activations are rarely used in practice, stacking multiple such two-layer blocks can be used to simulate higher-order polynomial neural networks and sigmodial activated neural networks~\citep{livni2014computational,soltanitowards}.

In practice, we have  $n$ training samples $\left\{\vect{x}_i,y_i\right\}_{i=1}^n$ and solve the following optimization problem to learn a neural network
\begin{align*}
\min_\mat{W} \frac{1}{n}\sum_{i=1}^{n}\ell\left(f(\vect{x}_i,\mat{W}),y_i\right)
\end{align*}where $\ell(\cdot,\cdot)$ is some loss function such as $\ell_2$ or logistic loss.
For gradient descent we use the following update\begin{align*}
\mat{W} \leftarrow \mat{W} - \eta\sum_{i=1}^{n} \nabla_\mat{W} \ell\left(f(x_i),y_i\right)
\end{align*} where $\eta$ is the step size.

To improve the generalization ability, we often add explicit regularization.
In this paper, we focus on a particular regularization technique, weight decay for which we slightly change the gradient descent algorithm to
\begin{align*}
\mat{W} \leftarrow \mat{W} - \eta\sum_{i=1}^{n}  \nabla_\mat{W} \ell\left(f(\vect{x}_i,\mat{W}),y_i\right) - \eta \lambda \mat{W}.
\end{align*} where $\lambda$ is the decay rate.
Note this algorithm is equivalent to applying the gradient descent algorithm on the regularized loss
\begin{align}
\min_\mat{W}L\left(\mat{W}\right) = \frac{1}{n}\sum_{i=1}^{n}\ell\left(f(\vect{x}_i),y_i\right) + \frac{\lambda}{2}\norm{\mat{W}}_F^2. \label{eqn:opt_problem}
\end{align}
In this setup, we make the following theoretical contributions to explain why over-parametrization helps optimization and still allows for generalization.

\subsection{Main Contributions}
\paragraph{Over-parametrization Helps Optimization.}
We analyze two kinds of over-parameterization.
First we show that for \[
k \ge d,
\] then all local minima in Problem~\eqref{eqn:opt_problem} is global and all saddle points are strict.
This properties together with recent algorithmic advances in non-convex optimization~\citep{lee2016gradient} imply gradient descent can find a globally optimal solution with random initialization.
This is a minor generalization of results in \cite{soltanolkotabi2017theoretical} which only includes $\ell_2$ loss, and \cite{haeffele2015global,haeffele2014structured} which only include $k\ge d+1$.

Second, we consider another form of over-parametrization, \[\frac{k(k+1)}{2} > n.\]
This condition on the amount of over-parameterization is much milder than $k \ge n$, a condition used in many previous papers~\citep{nguyen2017loss,nguyen2017loss2}.
Further in practice, $k(k+1)/2 > n$ is a much milder requirement than $k \ge d$, since if $k \approx \sqrt{2n}$ and $n << d^2$ then $ k << d$.
In this setting, we consider the \emph{perturbed} version of the Problem~\eqref{eqn:opt_problem}:
\begin{align}
\min_\mat{W}L_{\mat{C}}\left(\mat{W}\right) 
=&\frac{1}{n}\sum_{i=1}^{n}\ell\left(f(\vect{x}_i),y_i\right) +  \frac{\lambda}{2}\norm{\mat{W}}_F^2  \nonumber\\
&+\langle \mat{C}, \mat{W}^\top\mat{W}\rangle  \label{eqn:opt_problem_perturbed}
\end{align}
where $\mat{C}$ is a random positive semidefinite matrix with \emph{arbitrarily} small Frobenius norm.
We show that if $\frac{k(k+1)}{2} > n$, Problem~\eqref{eqn:opt_problem_perturbed} also has the desired properties that all local minima are global and all saddle points are strict with probability $1$.
Since $\mat{C}$ has small Frobenius norm, the optimal value of Problem~\eqref{eqn:opt_problem_perturbed} is very close to that of Problem~\eqref{eqn:opt_problem}. 
See Section~\ref{sec:opt} for the precise statement. 

To prove this surprising fact, we bring forward ideas from smoothed analysis in constructing the perturbed loss function \eqref{eqn:opt_problem_perturbed}, which we believe is useful for analyzing the landscape of non-convex losses. 
\paragraph{Weight-decay Helps Generalization.}
We show because of weight-decay, the optimal solution of Problem~\eqref{eqn:opt_problem} also generalizes well.
The major observation is weight-decay ensures the solution of Problem~\eqref{eqn:opt_problem} has low \emph{Frobenius} norm, which is equivalent to matrix $\mat{W}^\top\mat{W}$ having  low \emph{nuclear} norm \citep{srebro2005maximum}.
This observation allows us to use theory of Rademacher complexity to directly obtain quantitative generalization bounds.
Our theory applies to a wide range of data distribution and in particular, does not need to assume the model is realizable.
Further, the generalization bound does not depend on the number of epochs SGD runs or the number of hidden nodes.

To sum up, in this paper we justify the following folklore.
\begin{center}
\emph{Over-parametrization allows us to find global optima and with weight decay, the solution also generalizes well.}
\end{center}

\subsection{Organization}
\label{sec:org}
This paper is organized as follows.
In Section~\ref{sec:pre} we introduce necessary background and definitions.
In Section~\ref{sec:opt} we present our main theorems on why over-parametrization helps optimization when $k \ge d$ \emph{ or }$\frac{k(k+1)}{2} > n$.
In Section~\ref{sec:generalization}, we give quantitative generalization bounds to explain why weight decay helps generalization in the presence of over-parametrization.
In Section~\ref{sec:proof}, we prove our main theorems.
We conclude and list future works in Section~\ref{sec:con}.

\subsection{Related Works}
\label{sec:rel}

Neural networks have enjoyed great success in many practical applications~\citep{krizhevsky2012imagenet,dauphin2016language,silver2016mastering}.
To explain this success, many works have studied the expressiveness of neural networks.
The expressive ability of shallow neural network dates back to 90s~\citep{barron1994approximation}.
Recent results give more refined analysis on deeper models~\citep{bolcskei2017optimal,telgarsky2016benefits,wiatowski2017energy}.

However, from the point of view of learning theory, it is well known that training a neural network is hard in the worst case~\citep{blum1989training}.
Despite the worst-case pessimism, local search algorithms such as gradient descent are very successful in practice.
With some additional assumptions, many works tried to design algorithms that provably learn a neural network~\citep{goel2016reliably,sedghi2014provable,janzamin2015beating}.
However these algorithms are not gradient-based and do not provide insight on why local search algorithm works well.

Focusing on gradient-based algorithms, a line of research~\citep{tian2017analytical,brutzkus2017globally,zhong2017learning,zhong2017recovery,li2017convergence,du2017convolutional,du2017cnn} analyzed the behavior of (stochastic) gradient descent with a structural assumption on the input distribution. 
The major drawback of these papers is that they all focus on the regression setting with least-squares loss and further assume the model is realizable meaning the label is the output of a neural network plus a zero mean noise, which is unrealistic. In the case of more than one hidden unit, the papers of \cite{li2017convergence,zhong2017recovery} further require a stringent initialization condition to recover the true parameters.

Finding the optimal weights of a neural network is non-convex problem.
Recently, researchers found that if the objective functions satisfy the following two key properties:
	(1) all local minima are global
		and (2) all saddle points and local maxima are strict,
then first order method like gradient descent~\citep{ge2015escaping,jin2017escape,levy2016power,du2017gradient,lee2016gradient} can find a global minimum.

This motivates the research of studying the landscape of neural networks~\citep{kawaguchi2016deep,choromanska2015loss,freeman2016topology,zhou2017landscape,nguyen2017loss,nguyen2017loss2,ge2017learning,safran2017spurious,soltanolkotabi2017theoretical,poston1991local,haeffele2015global,haeffele2014structured,soudry2017exponentially}
In particular,~\citet{haeffele2015global,poston1991local,nguyen2017loss,nguyen2017loss2}studied the effect of over-parameterization on training the neural networks.
These results require a large amount of over-parameterization that the width of one of the hidden layers has to be greater than the number of training examples, which is unrealistic in commonly used neural networks.
Recently, \citet{soltanolkotabi2017theoretical} showed   for shallow neural networks, the number of hidden nodes is only required to be larger or equal to the input dimension for $\ell_2$-loss.
In comparison, our theorems work for general loss functions with regularization under the same assumption.
Further we also propose a new form of over-parameterization, namely as long as $k\ge\sqrt{2n}$, the loss function also admits a benign landscape.

We now turn our attention to generalization ability of learned neural networks.
It is well known that the classical learning theory cannot explain the generalization ability because VC-dimension of neural networks is large~\citep{harvey2017nearly,zhang2016understanding}.
A line of research tries to explain this phenomenon by studying the implicit regularization from stochastic gradient descent algorithm~\citep{hardt2015train,pensia2018generalization,mou2017generalization,brutzkus2017sgd,li2017algorithmic}.
However, the generalization bounds of these papers often depend on the number of epochs SGD runs, which is large in practice.
Another direction is to study the generalization ability based on the norms of weight matrices in neural networks~\citep{neyshabur2015norm,neyshabur2017pac,neyshabur2017exploring,bartlett2017spectrally,liang2017fisher,golowich2017size,dziugaite2017computing,wu2017towards}.
Our theorem on generalization ability also uses this idea but is more specialized to the network architecture~\eqref{eqn:architecture}.

After the initial submission of this manuscript, we became aware of concurrent work of \cite{bhojanapalli2018smoothed}, which also considered the smoothed analysis technique to solve semi-definite programs in penalty form. The mathematical techniques in our work and \cite{bhojanapalli2018smoothed} are similar, but the focus is on two distinct problems of solving semi-definite programs and quadratic activation neural networks.

\section{Preliminaries}
\label{sec:pre}
We use bold-faced letters for vectors and matrices.
For a vector $\vect{v}$, we use $\norm{\vect{v}}_2$ to denote the Euclidean norm.
For a matrix $\mat{M}$, we denote $\norm{\mat{M}}_2$ the spectral norm and $\norm{\mat{M}}_F$ the Frobenius norm.
We let $\nullspace\left(\mat{M}\right)$ to denote the left null-space of $\mat{M}$, i.e.
\[
\nullspace\left(\mat{M}\right) = \left\{\vect{v}: \mat{v}^\top \mat{M} = \mat{0}\right\}.
\] 
We use $\Sigma\left(M\right)$ to denote the set of matrices with Frobenius norm bounded by $M$ and $\Sigma_1\left(1\right)$ to denote the set of rank-$1$ matrices with spectral norm bounded by $1$.
We also denote $\mathcal{S}^d$ the set of $d \times d$ symmetric positive semidefinite matrices.


In this paper, we characterize the landscape of  over-parameterized neural networks.
More specifically we study the properties of critical points of empirical loss.
Here for a loss function $L\left(\mat{W}\right)$, a critical point $\mat{W}^*$ satisfies $\nabla L\left(\mat{W}^*\right) = 0$.
A critical point can be a local minimum or a saddle point.\footnote{We do not differentiate between saddle points and local maxima in this paper.}
If $\mat{W}^*$ is a local minimum, then there is a neighborhood $O$ around $\mat{W}^*$ such that $L\left(\mat{W}^*\right) \le L\left(\mat{W}\right)$ for all $\mat{W} \in O$.
If $\mat{W}^*$ is a saddle point, then for all neighborhood $O$ around $\mat{W}^*$, there is a $\mat{W} \in O$ such that $L\left(\mat{W}\right) < L\left(\mat{W}^*\right)$.

Ideally, we would like a loss function that satisfies the following two geometric properties.
\begin{property}[All local minima are global]\label{prop:local_global}
If $\mat{W}^*$ is a local minimum of $L\left(\cdot\right)$ it is also the global minimum, i.e., $\mat{W}^* \in \argmin_{\mat{W}}L\left(\mat{W}\right)$.
\end{property}
\begin{property}[All saddles are strict]\label{prop:saddle_strict}
At a saddle point $\mat{W}_s$, there is a direction $\mat{U}\in\mathbb{R}^{k \times d}$ such that \begin{align*}
\vectorize{\mat{U}}^\top \nabla^2L\left(\mat{W}_s\right) \vectorize{\mat{U}} < 0.
\end{align*}
\end{property}
If a loss function $L\left(\cdot\right)$ satisfies Property~\ref{prop:local_global} and Property~\ref{prop:saddle_strict}, recent algorithmic advances in non-convex optimization show randomly initialized gradient descent algorithm or perturbed gradient descent can find a global minimum~\citep{lee2016gradient,ge2015escaping,jin2017escape,du2017gradient}.

Lastly, standard applications of Rademacher complexity theory will be used to derive generalization bounds. 
\begin{defn}[Definition of Rademacher Complexity]
Given a sample $S = \left(\vect{x}_1,\ldots,\vect{x}_n\right)$, the empirical Rademacher complexity of a function class $\mathcal{F}$ is defined as \begin{align*}
R_S\left(\mathcal{F}\right) = \frac{1}{n}\expect_{\vect{\sigma}}\left[\sup_{f \in \mathcal{F}}\sum_{i=1}^{n}\sigma_i f(\vect{x}_i)\right]
\end{align*} where $\vect{\sigma} = \left(\sigma_1,\ldots,\sigma_m\right)$ are independent random varaibles drawn from the Rademacher distribution, i.e., $\prob\left(\sigma_i=1\right) = \prob\left(\sigma_i=-1\right) = 1/2$ for $i=1,\ldots,n$.
\end{defn}

\section{Overparametrization Helps Optimization}
\label{sec:opt}
In this section we present our main results on explaining why over-parametrization helps local search algorithms find a global optimal solution.
We consider two kinds of over-parameterization, $k\ge d$ and $\frac{k(k+1)}{2} > n$.
We begin with the simpler case when $k \ge d$.

\begin{thm}\label{thm:main_optimization}
Assume we have an arbitrary data set of input/label pairs $\vect{x}_i \in \mathbb{R}^d$ and $y_i \in \mathbb{R}$ for $i=1,\ldots,n$ and a convex $C^2$ loss $\ell(\hat y, y)$.
Consider a neural network of the form 
$f(\vect{x},\mat{W}) = \sum_{j=1}^{k}\relu{\vect{w}_j^\top\vect{x}}$
with $\relu{z}=z^2$ and $\mat{W} \in \mathbb{R}^{k\times d}$ denoting the weights connecting input to hidden layers.
Suppose  $k \ge d$.
Then, the training loss as a function of weight $\mat{W}$ of the hidden layers\begin{align*}
L\left(\mat{W}\right) = \frac{1}{2n}\ell\left(f(\vect{x}_i,\mat{W}),y_i\right) + \frac{\lambda}{2}\norm{\mat{W}}_F^2
\end{align*}
obeys Property~\ref{prop:local_global} and Property~\ref{prop:saddle_strict}.
\end{thm}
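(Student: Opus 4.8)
The plan is to exploit the fact that with quadratic activation the network output depends on $\mat{W}$ only through the Gram matrix $\mat{Z} = \mat{W}^\top\mat{W}$. Indeed $f(\vect{x},\mat{W}) = \sum_j (\vect{w}_j^\top\vect{x})^2 = \vect{x}^\top\mat{Z}\vect{x} = \langle\mat{Z},\vect{x}\vect{x}^\top\rangle$, and $\norm{\mat{W}}_F^2 = \tr(\mat{Z})$. Hence $L(\mat{W}) = g(\mat{W}^\top\mat{W})$, where
\[
g(\mat{Z}) = \frac{1}{2n}\sum_{i=1}^n \ell\big(\langle\mat{Z},\vect{x}_i\vect{x}_i^\top\rangle,\,y_i\big) + \frac{\lambda}{2}\tr(\mat{Z})
\]
is a convex $C^2$ function on symmetric matrices (a convex loss composed with a linear map, plus a linear regularizer). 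Since $k \ge d$, every $\mat{Z}\in\mathcal{S}^d$ factors as $\mat{W}^\top\mat{W}$, so $\min_\mat{W}L(\mat{W}) = \min_{\mat{Z}\succeq 0}g(\mat{Z})$, a convex program over the PSD cone. The theorem thus reduces to the Burer--Monteiro phenomenon: factoring a convex PSD program with enough columns creates no spurious local minima and only strict saddles.

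Next I would push derivatives through the chain rule. Writing $\mat{G} = \nabla g(\mat{Z})$ (a symmetric matrix), a direct computation gives $\nabla_\mat{W}L = 2\mat{W}\mat{G}$, so $\mat{W}^*$ is critical iff $\mat{W}^*\mat{G}^* = 0$ with $\mat{G}^* = \nabla g((\mat{W}^*)^\top\mat{W}^*)$; equivalently every row of $\mat{W}^*$ lies in $\Kernel(\mat{G}^*)$. This already yields complementary slackness for free: $\langle\mat{G}^*,\mat{Z}^*\rangle = \tr\!\big((\mat{W}^*\mat{G}^*)(\mat{W}^*)^\top\big) = 0$. Expanding $L(\mat{W}+t\mat{U})$ to second order then gives
\[
\vectorize{\mat{U}}^\top\nabla^2L(\mat{W})\vectorize{\mat{U}} = 2\langle\mat{G},\mat{U}^\top\mat{U}\rangle + \nabla^2 g(\mat{Z})[\,\mat{U}^\top\mat{W}+\mat{W}^\top\mat{U}\,],
\]
where the final term is the Hessian quadratic form of $g$ evaluated on $\mat{U}^\top\mat{W}+\mat{W}^\top\mat{U}$, which is \emph{nonnegative} because $g$ is convex.

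The core of the argument is a dichotomy on the sign of $\mat{G}^*$. If $\mat{G}^*\succeq 0$, then combined with the complementary slackness $\langle\mat{G}^*,\mat{Z}^*\rangle = 0$ and convexity of $g$, the point $\mat{Z}^*$ satisfies the KKT conditions for $\min_{\mat{Z}\succeq 0}g(\mat{Z})$ and is therefore a global minimizer, so $\mat{W}^*$ is a global minimum of $L$ (Property~\ref{prop:local_global}). If instead $\mat{G}^*\not\succeq 0$, pick a unit eigenvector $\vect{v}$ of $\mat{G}^*$ with eigenvalue $\mu<0$. Since $\mat{G}^*\neq 0$ yet the rows of $\mat{W}^*$ all lie in $\Kernel(\mat{G}^*)$, the matrix $\mat{W}^*$ cannot have full column rank, so $\rank(\mat{W}^*)<d\le k$ and there is a nonzero $\vect{e}\in\mathbb{R}^k$ orthogonal to the columns of $\mat{W}^*$. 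Taking the rank-one direction $\mat{U}=\vect{e}\vect{v}^\top$ forces $\mat{U}^\top\mat{W}^* + (\mat{W}^*)^\top\mat{U}=0$, annihilating the nonnegative convexity term, while $\langle\mat{G}^*,\mat{U}^\top\mat{U}\rangle = \norm{\vect{e}}_2^2\,\mu < 0$; hence the Hessian has a direction of strictly negative curvature and $\mat{W}^*$ is a strict saddle (Property~\ref{prop:saddle_strict}).

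I expect the main obstacle to be this explicit construction of the escape direction in the non-PSD case, and in particular verifying that a single direction simultaneously kills the convexity term and exposes the negative curvature of $\mat{G}^*$. This is exactly where the over-parametrization $k\ge d$ enters: it guarantees, via the rank deficiency of $\mat{W}^*$ at a non-optimal critical point, a nonzero $\vect{e}$ in the left null space of $\mat{W}^*$. The remaining pieces --- the chain-rule derivative formulas and the KKT characterization of global optimality for the lifted convex program --- are routine, though one must check carefully that $g$ is genuinely convex and $C^2$ and that the factorization covers the entire PSD cone when $k\ge d$.
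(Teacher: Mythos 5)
Your proof is correct, and it rests on the same central device as the paper's---lifting through $\mat{Z}=\mat{W}^\top\mat{W}$ to a convex program over the PSD cone (the paper uses the nuclear-norm form, which agrees with your trace regularizer on PSD matrices)---but the execution is genuinely different. The paper splits cases on $\rank(\mat{W})$: for $\rank(\mat{W})<d$ it invokes Theorem~\ref{lem:bach} of \citet{haeffele2014structured} as a black box, and for $\rank(\mat{W})=d$ it multiplies the stationarity equation by a left pseudo-inverse of $\mat{W}$ to conclude that the gradient of the lifted convex objective vanishes identically, so $\mat{W}^\top\mat{W}$ solves the convex problem. You instead split on the sign of the dual certificate $\mat{G}^*=\nabla g(\mat{Z}^*)$: the branch $\mat{G}^*\succcurlyeq 0$ is the KKT sufficiency argument, of which the paper's full-rank case is the degenerate instance (left-invertibility of $\mat{W}^*$ together with $\mat{W}^*\mat{G}^*=\mat{0}$ forces $\mat{G}^*=\mat{0}$), while the branch $\mat{G}^*\not\succcurlyeq 0$ reproves, rather than cites, the Burer--Monteiro/Haeffele mechanism by exhibiting the explicit escape direction $\mat{U}=\vect{e}\vect{v}^\top$ with $\vect{e}$ in the left null space of $\mat{W}^*$. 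Your route buys a self-contained, unified argument that makes transparent where $k\ge d$ enters: the rows of $\mat{W}^*$ are trapped in $\Kernel(\mat{G}^*)$, forcing $\rank(\mat{W}^*)<d\le k$ and hence the existence of $\vect{e}$. The paper's route is shorter modulo the citation, and its left-inverse observation isolates exactly the case ($k=d$ with full-rank $\mat{W}$) not covered by prior work. The only steps you should spell out to finish are routine: a local minimum of a $C^2$ function satisfies $\nabla^2 L(\mat{W}^*)\succcurlyeq 0$, so your dichotomy applies to it and yields Property~\ref{prop:local_global}; and a saddle point whose Hessian were PSD would be a global (hence local) minimum by the same dichotomy, a contradiction, which yields Property~\ref{prop:saddle_strict}.
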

The above result states that given an \emph{arbitrary} data set, the optimization landscape has benign properties that facilitate finding globally optimal neural networks.
In particular, by setting the last layer to be the average pooling layer, all local minima are global minima and all saddles have a direction of negative curvature.
This in turn implies that gradient descent on the first layer weights, when initialized at random, converges to a global optimum.
These desired properties hold as long as the hidden layer is wide ($k \ge d$).

An interesting and perhaps surprising aspect of Theorem~\ref{thm:main_optimization} is its generality.
It applies to arbitrary data set of any size with any convex differentiable loss function.


Now we consider the second case when $\frac{k(k+1)}{2} > n$. 
As mentioned earlier, in practice this is often a milder requirement than $k\ge d$, and one of the main novelties of this paper.

\begin{thm}\label{thm:main_optimization_2}
	Assume we have an arbitrary data set of input/label pairs $\vect{x}_i \in \mathbb{R}^d$ and $y_i \in \mathbb{R}$ for $i=1,\ldots,n$, and a convex $C^2$ loss $\ell(\hat y, y)$.
	Consider a neural network of the form 
	$f(\vect{x}) = \sum_{j=1}^{k}\relu{\vect{w}_j^\top\vect{x}}$
	with $\relu{z}=z^2$ and $\mat{W} \in \mathbb{R}^{k\times d}$ denoting the weights connecting input to hidden layers.
	Suppose  $\frac{k\left(k+1\right)}{2} > n, k < d$  and $\mat{C}$ is a random positive semidefinite matrix with $\norm{\mat{C}}_F \le \delta$ whose distribution is absolutely continuous with respect to Lebesgue measure.
	Then, the training loss as a function of weight $\mat{W}$ of the hidden layers\begin{align}
	L\left(\mat{W}\right) = \frac{1}{2n}\ell\left(f(\vect{x}_i),y_i\right) + \frac{\lambda}{2}\norm{\mat{W}}_F^2 + \langle \mat{C}, \mat{W}^\top \mat{W}\rangle \label{eqn:perturbed_obj}
	\end{align} 
	obeys  Property~\ref{prop:local_global} and Property~\ref{prop:saddle_strict}.
Further, any global optimal solution $\widehat{\mat{W}}$of Problem~\eqref{eqn:perturbed_obj} satisfies \begin{align*}
&\frac{1}{2n}\sum_{i=1}^{n}\ell\left(f(\vect{x}_i,\widehat{\mat{W}}),y_i\right) + \frac{\lambda}{2}\norm{\widehat{\mat{W}}}_F^2 \\
\le &\frac{1}{2n}\sum_{i=1}^{n}\ell\left(f(\vect{x}_i,\mat{W}^*),y_i\right) + \frac{\lambda}{2}\norm{\mat{W}^*}_2^F + \delta \norm{\mat{W}^*}_F^2
\end{align*} where  \[
\mat{W}^*
\in \argmin_{\mat{W}}\sum_{i=1}^{n}\frac{1}{2n}\ell\left(f(\vect{x}_i,\mat{W}),y_i\right) + \frac{\lambda}{2}\norm{\mat{W}^*}_F^2. \]
\end{thm}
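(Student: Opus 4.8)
The plan is to exploit the quadratic activation to recast $L$ as a convex function of the Gram matrix $\mat{Z}=\mat{W}^\top\mat{W}$, and then analyze the resulting factored (Burer--Monteiro) landscape. Since $\relu{\vect{w}_j^\top\vect{x}}=(\vect{w}_j^\top\vect{x})^2$, we have $f(\vect{x},\mat{W})=\sum_{j=1}^k\vect{x}^\top\vect{w}_j\vect{w}_j^\top\vect{x}=\langle\mat{W}^\top\mat{W},\vect{x}\vect{x}^\top\rangle$, together with $\norm{\mat{W}}_F^2=\tr(\mat{Z})$ and $\langle\mat{C},\mat{W}^\top\mat{W}\rangle=\langle\mat{C},\mat{Z}\rangle$. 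Thus $L(\mat{W})=g(\mat{Z})$ where $g(\mat{Z})=\frac{1}{2n}\sum_{i=1}^n\ell(\langle\mat{Z},\vect{x}_i\vect{x}_i^\top\rangle,y_i)+\frac{\lambda}{2}\tr(\mat{Z})+\langle\mat{C},\mat{Z}\rangle$ is convex on $\mathcal{S}^d$, being a convex loss composed with a linear map plus linear terms. First I would record, via the chain rule, the gradient and Hessian of $L$ in terms of $\mat{M}:=\nabla g(\mat{Z})$ (a symmetric matrix): one gets $\nabla L(\mat{W})=2\mat{W}\mat{M}$, so every critical point satisfies $\mat{W}\mat{M}=0$, and for any direction $\mat{U}$, $\vectorize{\mat{U}}^\top\nabla^2L(\mat{W})\vectorize{\mat{U}}=2\langle\mat{M},\mat{U}^\top\mat{U}\rangle+\langle\nabla^2g(\mat{Z})[\mat{S}],\mat{S}\rangle$ with $\mat{S}=\mat{W}^\top\mat{U}+\mat{U}^\top\mat{W}$, the last term being nonnegative by convexity of $g$.

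The next step treats \emph{rank-deficient} critical points, where $\rank(\mat{W})<k$. Here there is a nonzero $\vect{u}\in\mathbb{R}^k$ with $\vect{u}^\top\mat{W}=0$; choosing the perturbation $\mat{U}=\vect{u}\vect{z}^\top$ forces $\mat{S}=0$, so the Hessian form collapses to $2\langle\mat{M},\mat{U}^\top\mat{U}\rangle=2\norm{\vect{u}}_2^2\,\vect{z}^\top\mat{M}\vect{z}$. If $\mat{M}\not\succeq0$, picking $\vect{z}$ along a negative eigenvector yields strict negative curvature, so $\mat{W}$ is a strict saddle; if instead $\mat{W}$ is a second-order critical point we must have $\mat{M}\succeq0$. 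Combining $\mat{M}\succeq0$ with the complementary-slackness identity $\langle\mat{M},\mat{Z}\rangle=\tr(\mat{W}\mat{M}\mat{W}^\top)=0$ (immediate from $\mat{W}\mat{M}=0$) gives exactly the KKT conditions for the convex problem $\min_{\mat{Z}\in\mathcal{S}^d}g(\mat{Z})$, so $\mat{Z}$ is a global minimizer of $g$ and hence $\mat{W}$ globally minimizes $L$. Thus every rank-deficient critical point is either global or a strict saddle.

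The main obstacle is ruling out \emph{full-rank} critical points ($\rank(\mat{W})=k$), where no null direction $\vect{u}$ exists and the convexity term can mask a negative eigenvalue of $\mat{M}$, so spurious local minima are possible for a fixed problem; this is exactly where the random $\mat{C}$ and the counting hypothesis $\frac{k(k+1)}{2}>n$ enter. The key observation is that $\mat{W}\mat{M}=0$ with $\mat{W}$ of full row rank $k$ forces $\range(\mat{M})\subseteq\Kernel(\mat{W})$, hence $\nullspace$ of dimension at least $k$, i.e. $\mat{M}$ lies in the determinantal variety of symmetric matrices of rank $\le d-k$, which has codimension $\frac{k(k+1)}{2}$ in $\mathcal{S}^d$. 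On the other hand $\mat{M}=\frac{\lambda}{2}\mat{I}+\frac{1}{2n}\sum_i c_i\,\vect{x}_i\vect{x}_i^\top+\mat{C}$, where $c_i=\partial_1\ell(\langle\mat{Z},\vect{x}_i\vect{x}_i^\top\rangle,y_i)$, so the data-dependent part ranges over an affine family of dimension at most $n$ (spanned by the $n$ matrices $\vect{x}_i\vect{x}_i^\top$). I would therefore show that the set of perturbations admitting \emph{any} full-rank critical point, namely those $\mat{C}$ for which $\frac{\lambda}{2}\mat{I}+\frac{1}{2n}\sum_ic_i\vect{x}_i\vect{x}_i^\top+\mat{C}$ has nullity $\ge k$ for some $c\in\mathbb{R}^n$, has dimension at most $\frac{d(d+1)}{2}-\frac{k(k+1)}{2}+n$, which is strictly less than $\dim\mathcal{S}^d=\frac{d(d+1)}{2}$ precisely when $\frac{k(k+1)}{2}>n$. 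Since $\mat{C}$ is absolutely continuous with respect to Lebesgue measure, it avoids this measure-zero set with probability one, so almost surely there are \emph{no} full-rank critical points; combined with the previous paragraph, every critical point is rank-deficient and hence global or a strict saddle, establishing Property~\ref{prop:local_global} and Property~\ref{prop:saddle_strict}. Making the dimension count fully rigorous (a transversality/Sard-type argument over the algebraic variety) is the delicate step, and I would lean on the smoothed-analysis machinery for low-rank semidefinite problems here.

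Finally, the approximation inequality is routine. Since $\widehat{\mat{W}}$ minimizes $L_{\mat{C}}$, we have $L_{\mat{C}}(\widehat{\mat{W}})\le L_{\mat{C}}(\mat{W}^*)$. On the left, $\langle\mat{C},\widehat{\mat{W}}^\top\widehat{\mat{W}}\rangle\ge0$ because $\mat{C}$ and $\widehat{\mat{W}}^\top\widehat{\mat{W}}$ are both positive semidefinite, so the left side dominates $\frac{1}{2n}\sum_i\ell(f(\vect{x}_i,\widehat{\mat{W}}),y_i)+\frac{\lambda}{2}\norm{\widehat{\mat{W}}}_F^2$; on the right, $\langle\mat{C},(\mat{W}^*)^\top\mat{W}^*\rangle\le\norm{\mat{C}}_2\,\tr((\mat{W}^*)^\top\mat{W}^*)\le\norm{\mat{C}}_F\norm{\mat{W}^*}_F^2\le\delta\norm{\mat{W}^*}_F^2$, which yields the stated bound.
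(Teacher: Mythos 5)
Your proposal is correct and follows essentially the same route as the paper: both reduce the problem to showing that every second-order stationary point of the perturbed loss is rank-deficient almost surely, via the identical dimension count (a full-rank stationary point forces the certificate matrix $\frac{\lambda}{2}\mat{I}+\frac{1}{2n}\sum_i c_i\vect{x}_i\vect{x}_i^\top+\mat{C}$ to have nullity $\ge k$, placing $\mat{C}$ in a set of dimension at most $\frac{d(d+1)}{2}-\frac{k(k+1)}{2}+n<\frac{d(d+1)}{2}$, hence Lebesgue-null), and both finish the approximation inequality with the same nonnegativity argument $\langle\mat{C},\widehat{\mat{W}}^\top\widehat{\mat{W}}\rangle\ge 0$. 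The only difference is that where the paper cites Theorem 2 of Haeffele et al. for ``rank-deficient second-order stationary point $\Rightarrow$ global minimum,'' you re-derive that lemma inline through the convex Gram-matrix reformulation, the strict-saddle direction $\mat{U}=\vect{u}\vect{z}^\top$, and the KKT conditions --- which makes the argument self-contained but is not a structurally different proof.
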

Similar to Theorem~\ref{thm:main_optimization}, Theorem~\ref{thm:main_optimization_2} states that if $\frac{k(k+1)}{2} >n$, then for an arbitrary data set, the \emph{perturbed} objective function~\eqref{eqn:opt_problem_perturbed} has the desired properties that enable local search heuristics to find globally optimal solution for a general class of loss functions.
Further, we can choose this perturbation to be arbitrarily small so the minimum of~\eqref{eqn:opt_problem_perturbed} is close to~\eqref{eqn:opt_problem}. 



The proof of theorem is inspired by a line of literature started by~\citet{pataki1998rank,pataki2000geometry,burer2003nonlinear,boumal2016non}. 
In summary, \citet{boumal2016non} showed that for ``almost all" semidefinite programs, every local minima of the rank $r$ non-convex formulation of an SDP is a global minimum of the original SDP. 
However, this theorem applies with the important caveat of only applying to semidefinite programs that do not fall into a measure zero set. 
Our primary contribution is to develop a procedure that exploits this by a) constructing a perturbed objective to avoid the measure zero set, b) proving that the perturbed objective has Property \ref{prop:local_global} and~\ref{prop:saddle_strict}, and c) showing the optimal value of the perturbed objective is close to the original objective. 
Further note that the analysis of \cite{boumal2016non} does not apply since our loss functions, such as the logistic loss, are not semi-definite representable.
We refer readers to Section~\ref{sec:proof_main_2} for more technical insights.

\section{Weight-decay Helps Generalization}
\label{sec:generalization}
In this section we switch our focus to the generalization ability of the learned neural network.
Since we use weight-decay or equivalently $\ell_2$ regularization in~\eqref{eqn:opt_problem}, the Frobenius norm of learned weight $\mat{W}$ is bounded.
Therefore, in this section we focus weight matrix in bounded Frobenius norm space, i.e., 
$\norm{\mat{W}}_F \in \Sigma\left(M\right)$.

To derive the generalization bound, we first recall the classical generalization bound based on Rademacher complexity bound (c.f. Theorem 2 of~\cite{koltchinskii2002empirical}).
\begin{thm}\label{thm:classical_generalization}
Assume each data point is sampled i.i.d from some distribution $\mathcal{P}$, i.e., \[
\left(\vect{x}_i,y_i\right) \sim P \text{ for }i=1,\ldots,n.
\]
We denote $S = \left\{\vect{x}_i,y_i\right\}_{i=1}^n$ and $L_{tr}\left(\mat{W}\right) = \frac{1}{n}\sum_{i=1}^{n}\ell\left(f(\vect{x}_i,\mat{W}),y_i\right)$ and $L_{te}\left(\mat{W}\right)= \expect_{\left(\vect{x},y\right) \sim \mathcal{P}}\left[\ell\left(f(\vect{x}_i,\mat{W}),y_i\right)\right]$.
Suppose loss function $\ell(\cdot,\cdot)$ is $L$-Lipschitz in the first argument, then for all $\mat{W} \in \Sigma\left(M\right)$, we have with high probability \begin{align*}
L_{te}\left(\mat{W}\right) - L_{tr}\left(\mat{W}\right) \le CL \cdot R_{S}\left(\Sigma\left(M\right)\right)
\end{align*}
where $C>0$ is an absolute constant and $R_{S}\left(\Sigma\left(M\right)\right)$ is the Rademacher complexity of $\Sigma\left(M\right)$.
\end{thm}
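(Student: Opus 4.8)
The plan is to treat this as the textbook two-stage uniform-convergence argument: first reduce the uniform deviation $\sup_{\mat{W}\in\Sigma\left(M\right)}\left(L_{te}\left(\mat{W}\right)-L_{tr}\left(\mat{W}\right)\right)$ to a Rademacher complexity through symmetrization and concentration, and then exploit the Lipschitz structure of $\ell$ to peel off the loss and reduce to the Rademacher complexity of the network-output class itself, which is exactly $R_S\left(\Sigma\left(M\right)\right)$ in the stated notation.

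First I would introduce the loss-composed class $\mathcal{G} = \left\{\left(\vect{x},y\right)\mapsto \ell\left(f\left(\vect{x},\mat{W}\right),y\right) : \mat{W}\in\Sigma\left(M\right)\right\}$ and set $\Phi\left(S\right) = \sup_{\mat{W}\in\Sigma\left(M\right)}\left(L_{te}\left(\mat{W}\right)-L_{tr}\left(\mat{W}\right)\right)$. Since each training point enters $L_{tr}$ with weight $1/n$ and $\ell$ is bounded over the compact index set $\Sigma\left(M\right)$, replacing a single sample changes $\Phi$ by $O\left(1/n\right)$; McDiarmid's bounded-difference inequality then gives, with probability at least $1-\delta$, the bound $\Phi\left(S\right) \le \expect_S\left[\Phi\left(S\right)\right] + O\left(\sqrt{\log\left(1/\delta\right)/n}\right)$, which supplies the ``with high probability'' qualifier in the statement.

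Next, the standard symmetrization lemma yields $\expect_S\left[\Phi\left(S\right)\right] \le 2\,\expect_S R_S\left(\mathcal{G}\right)$, introducing the Rademacher average of the composed class. To strip the loss I would invoke Talagrand's contraction (Ledoux--Talagrand) lemma applied coordinatewise to the label-dependent maps $\phi_i\left(t\right)=\ell\left(t,y_i\right)$: each $\phi_i$ is $L$-Lipschitz in $t$ by hypothesis, so $R_S\left(\mathcal{G}\right) \le L\cdot R_S\left(\mathcal{F}\right)$, where $\mathcal{F}=\left\{f\left(\cdot,\mat{W}\right):\mat{W}\in\Sigma\left(M\right)\right\}$ is precisely the class whose complexity is denoted $R_S\left(\Sigma\left(M\right)\right)$. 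Chaining the two stages and absorbing the additive concentration term together with all universal constants into $C$ gives $L_{te}\left(\mat{W}\right)-L_{tr}\left(\mat{W}\right) \le C\,L\cdot R_S\left(\Sigma\left(M\right)\right)$ uniformly over $\mat{W}\in\Sigma\left(M\right)$.

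The only delicate points are routine rather than conceptual. The contraction step is applied per-coordinate to the maps $\phi_i$, so the joint dependence of $\ell$ on the label $y$ causes no difficulty as long as Lipschitzness holds in the first argument. The bounded-difference step tacitly requires $\ell$ to be bounded over the feasible region, which holds because $\Sigma\left(M\right)$ is compact and $f\left(\vect{x},\mat{W}\right)=\langle \mat{W}^\top\mat{W},\vect{x}\vect{x}^\top\rangle$ is continuous in $\mat{W}$; I expect the main (though minor) obstacle to be tracking this boundedness carefully enough that $C$ is genuinely an absolute constant. Since this is by now a classical result, the whole argument simply follows Theorem~2 of~\cite{koltchinskii2002empirical}.
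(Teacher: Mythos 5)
Your proposal is correct and coincides with the paper's approach: the paper gives no proof of this theorem at all, citing it as Theorem 2 of \citet{koltchinskii2002empirical}, and your McDiarmid--symmetrization--Talagrand-contraction argument is exactly the standard derivation behind that citation. The only points to watch --- which the paper's informal statement also glosses over --- are that the bounded-differences step needs the loss to be bounded over the \emph{data} domain, not merely over the compact index set $\Sigma\left(M\right)$, and that the additive $O\bigl(\sqrt{\log\left(1/\delta\right)/n}\bigr)$ concentration term can be absorbed into $C L \cdot R_{S}\left(\Sigma\left(M\right)\right)$ only when the Rademacher complexity is itself of order at least $1/\sqrt{n}$.
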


With Theorem~\ref{thm:classical_generalization} at hand, we only need to bound the Rademacher complexity of $\Sigma\left(M\right)$.
Note that Rademacher complexity is a distribution dependent quantity.
If the data is arbitrary, we cannot have any guarantee.
We begin with a theorem for bounded input domain.
\begin{thm}\label{thm:bounded_x_rad}
Suppose input is sampled from a bounded ball in $\mathcal{R}^d$, i.e., $\norm{\vect{x}}_2 \le b$ for some $b > 0$ and $\expect\left[\norm{\vect{x}\vect{x}^\top}_2^2\right] \le B$, then the Rademacher complexity satisfies \[R_{S}\left(\Sigma\left(M\right)\right) \le \sqrt{\frac{2b^4M^4\log d}{n}}.\]
\end{thm}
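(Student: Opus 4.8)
The plan is to exploit the fact that the quadratic network is a \emph{linear} functional of the lifted feature $\vect{x}\vect{x}^\top$. Writing $\vect{w}_j$ for the $j$-th row of $\mat{W}$,
\[
f(\vect{x},\mat{W}) = \sum_{j=1}^{k}\left(\vect{w}_j^\top\vect{x}\right)^2 = \vect{x}^\top\!\left(\mat{W}^\top\mat{W}\right)\vect{x} = \langle \mat{W}^\top\mat{W},\, \vect{x}\vect{x}^\top\rangle,
\]
so every function in the class is parameterized by the positive semidefinite matrix $\mat{Z} := \mat{W}^\top\mat{W}$, and the Frobenius/weight-decay constraint becomes a \emph{nuclear-norm} constraint, $\norm{\mat{Z}}_* = \tr(\mat{W}^\top\mat{W}) = \norm{\mat{W}}_F^2 \le M^2$. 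Since $\{\mat{W}^\top\mat{W} : \norm{\mat{W}}_F\le M\}$ is contained in the symmetric nuclear-norm ball of radius $M^2$, replacing the former by the latter only enlarges the supremum, so
\[
R_S(\Sigma(M)) \le \frac{1}{n}\,\E_{\vct\sigma}\left[\sup_{\norm{\mat{Z}}_*\le M^2}\left\langle \mat{Z},\ \sum_{i=1}^{n}\sigma_i\,\vect{x}_i\vect{x}_i^\top\right\rangle\right].
\]

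Next I would apply the duality between the nuclear and spectral norms: for any symmetric matrix $\mat{A}$, $\sup_{\norm{\mat{Z}}_*\le 1}\langle\mat{Z},\mat{A}\rangle = \norm{\mat{A}}_2$. Taking $\mat{A} = \sum_{i}\sigma_i\vect{x}_i\vect{x}_i^\top$ collapses the inner supremum and reduces the entire quantity to the expected operator norm of a matrix Rademacher series,
\[
R_S(\Sigma(M)) \le \frac{M^2}{n}\,\E_{\vct\sigma}\norm{\sum_{i=1}^{n}\sigma_i\,\vect{x}_i\vect{x}_i^\top}_2.
\]

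The substantive step is to bound this matrix Rademacher sum, for which I would invoke the matrix (noncommutative) Khintchine inequality: for fixed symmetric $d\times d$ matrices $\mat{A}_i$,
\[
\E_{\vct\sigma}\norm{\sum_{i=1}^{n}\sigma_i\mat{A}_i}_2 \le \sqrt{2\log d}\ \norm{\sum_{i=1}^{n}\mat{A}_i^2}_2^{1/2}.
\]
With $\mat{A}_i = \vect{x}_i\vect{x}_i^\top$ we have $\mat{A}_i^2 = \norm{\vect{x}_i}_2^2\,\vect{x}_i\vect{x}_i^\top \succeq 0$, so by the triangle inequality and the domain bound $\norm{\vect{x}_i}_2\le b$,
\[
\norm{\sum_{i=1}^{n}\mat{A}_i^2}_2 \le \sum_{i=1}^{n}\norm{\vect{x}_i}_2^4 \le n\,b^4.
\]
Substituting gives $R_S(\Sigma(M)) \le \frac{M^2}{n}\sqrt{2\log d}\,\sqrt{n\,b^4} = \sqrt{\tfrac{2b^4M^4\log d}{n}}$, matching the claim; the moment hypothesis $\E\norm{\vect{x}\vect{x}^\top}_2^2\le B$ is not needed beyond the uniform radius $b$ and merely supplies an in-expectation alternative for controlling $\norm{\sum_i\mat{A}_i^2}_2$.

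I expect the matrix-concentration step to be the main obstacle: a scalar argument (e.g.\ controlling each term and summing, or a union bound over an $\epsilon$-net of the matrix ball) would either lose a full factor of $d$ or incur a dimension-dependent covering term, whereas matrix Khintchine is precisely what yields the mild $\sqrt{\log d}$ dependence appearing in the statement. A secondary point requiring care is justifying that the first relaxation---dropping the positive-semidefinite and low-rank structure of $\mat{W}^\top\mat{W}$ and passing to the full nuclear-norm ball---only increases the supremum, so that the displayed chain is a legitimate upper bound on $R_S(\Sigma(M))$.
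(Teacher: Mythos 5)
Your proposal is correct and follows essentially the same route as the paper: the paper likewise reduces to the supremum over (the convex hull of) rank-one spectral-norm-one matrices scaled by $M^2$ --- which is exactly your nuclear-norm-ball relaxation and duality step --- arrives at $\frac{M^2}{n}\E_{\vct\sigma}\norm{\sum_{i=1}^n \sigma_i \vect{x}_i\vect{x}_i^\top}_2$, and then invokes the matrix Rademacher series bound (Theorem 4.6.1 of Tropp) together with the same triangle-inequality estimate $\norm{\sum_{i=1}^n(\vect{x}_i\vect{x}_i^\top)^2}_2 \le nb^4$. The only cosmetic difference is that the paper factors the argument through an intermediate fourth-moment theorem, whereas you carry out the same computation inline.
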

Combining Theorem~\ref{thm:classical_generalization} and Theorem~\ref{thm:bounded_x_rad} we can obtain a generalization bound.
\begin{thm} \label{thm:generalization_bounded}
Under the same assumptions of Theorem~\ref{thm:classical_generalization} and Theorem~\ref{thm:bounded_x_rad}, we have \begin{align*}
L_{te}\left(\mat{W}\right) - L_{tr}\left(\mat{W}\right) \le CLM^2b^2\sqrt{\frac{\log d}{n}}. 
\end{align*} for some absolute constant $C>0$.
\end{thm}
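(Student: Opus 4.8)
The plan is to obtain Theorem~\ref{thm:generalization_bounded} as an immediate consequence of the two preceding results, since its hypotheses are exactly the union of those in Theorem~\ref{thm:classical_generalization} and Theorem~\ref{thm:bounded_x_rad}. First I would invoke Theorem~\ref{thm:classical_generalization}, which under the stated i.i.d.\ sampling and $L$-Lipschitz loss gives, with high probability and for every $\mat{W} \in \Sigma\left(M\right)$,
\[
L_{te}\left(\mat{W}\right) - L_{tr}\left(\mat{W}\right) \le C L \cdot R_S\left(\Sigma\left(M\right)\right).
\]
Then I would substitute the Rademacher complexity bound of Theorem~\ref{thm:bounded_x_rad}, namely $R_S\left(\Sigma\left(M\right)\right) \le \sqrt{2 b^4 M^4 \log d / n}$, directly into the right-hand side.

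The only remaining step is an algebraic simplification of the square root, factoring it as
\[
\sqrt{\frac{2 b^4 M^4 \log d}{n}} = \sqrt{2}\, b^2 M^2 \sqrt{\frac{\log d}{n}},
\]
using $\sqrt{b^4} = b^2$ and $\sqrt{M^4} = M^2$ (both $b, M \ge 0$), and then absorbing the harmless factor $\sqrt{2}$ into the absolute constant. Renaming $\sqrt{2}\,C$ as a fresh absolute constant $C > 0$ yields exactly
\[
L_{te}\left(\mat{W}\right) - L_{tr}\left(\mat{W}\right) \le C L M^2 b^2 \sqrt{\frac{\log d}{n}},
\]
as claimed, valid simultaneously for all $\mat{W} \in \Sigma\left(M\right)$ on the same high-probability event inherited from Theorem~\ref{thm:classical_generalization}.

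It is worth noting where the actual content resides: all the statistical and combinatorial substance is already packaged in the two cited theorems — the symmetrization/concentration argument behind Theorem~\ref{thm:classical_generalization}, and, more importantly, the Rademacher complexity computation for the class $\Sigma\left(M\right)$ behind Theorem~\ref{thm:bounded_x_rad}. Consequently this statement is essentially a corollary, and I do not anticipate any genuine obstacle. The one thing I would be careful about is bookkeeping of the absolute constant, which I would keep generic throughout (reusing the symbol $C$ after the final rescaling) rather than tracking its precise numerical value, so that the $\sqrt{2}$ and any constant from the Lipschitz/symmetrization step are cleanly folded into a single $C$.
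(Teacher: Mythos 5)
Your proposal is correct and takes essentially the same route as the paper: the paper obtains Theorem~\ref{thm:generalization_bounded} by combining Theorem~\ref{thm:classical_generalization} with the Rademacher complexity bound for $\Sigma\left(M\right)$ (its displayed proof just folds in the derivation of that bound, showing $\norm{\sum_{i=1}^{n}\left(\vect{x}_i\vect{x}_i^\top\right)^2}_2 \le nb^4$ and invoking Theorem~\ref{thm:generalization_fourth_order}, which is precisely how Theorem~\ref{thm:bounded_x_rad} is established). Your chaining of the two cited theorems plus the $\sqrt{2}\,b^2M^2$ simplification and absorption of constants is exactly the intended argument.
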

While Theorem~\ref{thm:bounded_x_rad} is a valid bound, it is rather pessimistic because we only assume $\vect{x}$ is bounded.
Consider the following scenario in which each input is sampled from a standard Gaussian distribution $\vect{x}_i \sim N\left(0,\mat{I}\right)$.
Then ignoring the logarithmic factors, using standard Gaussian concentration bound we can show with high probability $\norm{\vect{x}_i}_2 = \widetilde{O}\left(\sqrt{d}\right)$. \footnote{$\widetilde{O}(\cdot)$ hides logarithmic factors.}
Plugging in this bound we have \begin{align}
L_{te}\left(\mat{W}\right) - L_{tr}\left(\mat{W}\right) \le CLM^2\sqrt{\frac{d^2\log d}{n}} \label{eqn:Gaussian_worse}
\end{align}
Note in this bound, it has a quadratic dependency on the dimension, so we need to have $n = \Omega\left(d^2\right)$ to have a meaningful bound.

In fact, for specific distributions like Gaussian using Theorem \ref{thm:generalization_fourth_order}, we can often derive a stronger generalization bound. 
\begin{cor}\label{thm:Gaussian_x_rad}
Suppose $\vect{x}_i \sim N(0,\mat{I})$ for $i=1,\ldots,n$.
If the number of samples satisfies $n \ge d\log d$, we have with high probability that Rademacher complexity satisfies \[R_{S}\left(\Sigma\left(M\right)\right) \le C\sqrt{\frac{M^4d}{n}}.\]
for some absolute constant $C > 0$. 
\label{cor:gaussian}
\end{cor}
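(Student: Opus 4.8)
The plan is to first rewrite the hypothesis class through the PSD matrix $\mat{Z}=\mat{W}^\top\mat{W}$. Since $f(\vect{x},\mat{W})=\sum_j(\vect{w}_j^\top\vect{x})^2=\langle\mat{W}^\top\mat{W},\vect{x}\vect{x}^\top\rangle$ and $\norm{\mat{W}}_F^2=\tr(\mat{W}^\top\mat{W})$, every $\mat{W}\in\Sigma(M)$ induces a function $\vect{x}\mapsto\langle\mat{Z},\vect{x}\vect{x}^\top\rangle$ with $\mat{Z}\succeq0$ and $\tr(\mat{Z})\le M^2$; relaxing the constraint to the full trace ball only enlarges the supremum and hence gives a valid upper bound. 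Plugging into the definition of $R_S$ and using the SDP-duality identity $\sup_{\mat{Z}\succeq0,\ \tr(\mat{Z})\le M^2}\langle\mat{Z},\mat{A}\rangle=M^2\lambda_{\max}(\mat{A})\le M^2\norm{\mat{A}}_2$, I obtain
\[
R_S(\Sigma(M))\le\frac{M^2}{n}\,\E_{\sigma}\Big\|\sum_{i=1}^n\sigma_i\vect{x}_i\vect{x}_i^\top\Big\|_2 .
\]
Everything thus reduces to bounding the expected spectral norm of the signed sum $\mat{A}=\sum_i\sigma_i\vect{x}_i\vect{x}_i^\top$ of rank-one outer products by $C\sqrt{nd}$.

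\textbf{Sharp matrix concentration.} By the standard comparison $\E_\sigma\|\sum_i\sigma_i\mat{B}_i\|_2\le\sqrt{\pi/2}\,\E_g\|\sum_i g_i\mat{B}_i\|_2$ with $g_i\sim N(0,1)$, it suffices to treat the Gaussian series with $\mat{B}_i=\vect{x}_i\vect{x}_i^\top$. The naive non-commutative Khintchine or matrix Bernstein inequality only yields $\E\|\cdot\|_2\lesssim\sigma\sqrt{\log d}$ with the strong variance $\sigma^2=\|\sum_i\mat{B}_i^2\|_2$, which is of order $nd$ and therefore carries an unwanted $\sqrt{\log d}$. To remove it I would invoke the sharp bound $\E_g\|\sum_i g_i\mat{B}_i\|_2\le C(\sigma+\sigma_*\sqrt{\log d})$ of Bandeira--van Handel, where $\sigma_*=\sup_{\norm{\vect{u}}=\norm{\vect{v}}=1}(\sum_i(\vect{u}^\top\mat{B}_i\vect{v})^2)^{1/2}$ is the \emph{weak} variance. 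The key is that the $\sqrt{\log d}$ multiplies only $\sigma_*$, which for rank-one $\mat{B}_i$ is far smaller than $\sigma$.

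\textbf{Controlling the data-dependent variances.} It then remains to show, with high probability over $\{\vect{x}_i\}$, that (i) $\sigma^2=\|\sum_i\norm{\vect{x}_i}_2^2\vect{x}_i\vect{x}_i^\top\|_2\le Cnd$ and (ii) $\sigma_*^2=\sup_{\norm{\vect{u}}=\norm{\vect{v}}=1}\sum_i(\vect{u}^\top\vect{x}_i)^2(\vect{v}^\top\vect{x}_i)^2\le C(n+d^2)$. For (i), each summand has mean $(d+2)\mat{I}$, so $\E[\sum_i\norm{\vect{x}_i}^2\vect{x}_i\vect{x}_i^\top]=n(d+2)\mat{I}$ has norm $\Theta(nd)$, and the deviation is controlled once $n\ge d\log d$. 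By Cauchy--Schwarz (ii) reduces to the operator norm of the empirical fourth-moment form $\sup_{\norm{\vect{u}}=1}\sum_i(\vect{u}^\top\vect{x}_i)^4$. Granting (i)--(ii), the hypothesis $n\ge d\log d$ gives $\sigma_*\sqrt{\log d}\le\sqrt{(n+d^2)\log d}\le C\sqrt{nd}$ (using $n\log d\le nd$ and $d^2\log d\le nd$), whence $\E_\sigma\|\mat{A}\|_2\le C(\sigma+\sigma_*\sqrt{\log d})\le C\sqrt{nd}$ and finally $R_S(\Sigma(M))\le CM^2\sqrt{d/n}=C\sqrt{M^4d/n}$.

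\textbf{Main obstacle.} The crux is the log-free control of the weak variance, namely $\sup_{\norm{\vect{u}}=1}\sum_i(\vect{u}^\top\vect{x}_i)^4\le C(n+d^2)$ with high probability. The crude bound $\max_i\norm{\vect{x}_i}^2\cdot\|\sum_i\vect{x}_i\vect{x}_i^\top\|_2\approx nd$ is too lossy and would reinstate the $\sqrt{\log d}$; instead I would run an $\varepsilon$-net/chaining argument on the sphere, separating the contribution of the single largest sample (which produces the $d^2$ term) from the well-concentrated bulk (which produces the $n$ term), with sub-exponential tail bounds sharp enough to survive the union bound over a net of size $9^d$. Pinning down the correct order $n+d^2$ rather than $nd$, without extra logarithmic factors beyond what $n\ge d\log d$ can absorb, is the delicate part; the remaining steps are routine concentration.
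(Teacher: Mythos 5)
Your opening reduction, $R_S\left(\Sigma\left(M\right)\right)\le\frac{M^2}{n}\E_{\vect{\sigma}}\norm{\sum_{i=1}^n\sigma_i\vect{x}_i\vect{x}_i^\top}_2$, is the same as the paper's first step (the paper passes through the rank-one class $\Sigma_1\left(1\right)$ and takes convex hulls; your SDP-duality derivation is equivalent). After that the paper does something far simpler than what you propose: it applies the matrix Rademacher-series bound (Theorem 4.6.1 of Tropp) to get $\sqrt{2s\log d}$ with $s=\norm{\sum_i\left(\vect{x}_i\vect{x}_i^\top\right)^2}_2$, and then invokes Lemma 4.7 of Soltanolkotabi et al.\ (this is where $n\ge d\log d$ and the high-probability statement enter) to get $s\le Cnd$. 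As you correctly observe, that route only yields $CM^2\sqrt{d\log d/n}$; the paper's own proof in fact establishes the corollary only up to this extra $\sqrt{\log d}$, which the statement silently drops, and spotting that loss is the real merit of your write-up. However, your proposed repair has a genuine gap at its load-bearing step: the inequality $\E_g\norm{\sum_ig_i\mat{B}_i}_2\le C\left(\sigma+\sigma_*\sqrt{\log d}\right)$ is a theorem of Bandeira--van Handel only for symmetric matrices with \emph{independent} entries, $X_{ij}=b_{ij}g_{ij}$. Your matrix $\sum_ig_i\vect{x}_i\vect{x}_i^\top=\mat{X}^\top\mathrm{diag}\left(g\right)\mat{X}$ (where $\mat{X}$ is the $n\times d$ matrix with rows $\vect{x}_i^\top$) is a general Gaussian series, not an independent-entry matrix, and no bound of the form ``strong variance plus weak variance times $\sqrt{\log d}$'' is known at that level of generality. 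The refinements that do apply to general series (Tropp's second-order matrix concentration, the intrinsic-freeness bounds of Bandeira--Boedihardjo--van Handel) are governed by the covariance parameter $v^2=\norm{\left[\left(\vect{x}_i^\top\vect{x}_j\right)^2\right]_{i,j\le n}}_2$, which for Gaussian data with $n\ge d$ is of order $nd\asymp\sigma^2$, so they give at best $\sqrt{nd}\left(\log d\right)^{3/4}$ --- no improvement over matrix Khintchine. Thus the step on which your entire log-removal rests is unsupported; and your second key estimate, $\sup_{\norm{\vect{u}}_2=1}\sum_i\left(\vect{u}^\top\vect{x}_i\right)^4\le C\left(n+d^2\right)$, you yourself leave as an unproven ``obstacle'' (it happens to be true --- it equals $\norm{\mat{X}}_{2\to 4}^4$ and follows from Chevet's inequality plus Gaussian concentration --- but it is not proved in your argument).

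If you want a rigorous log-free bound, the cleanest route is to exploit the Gaussianity of the data at the matrix-concentration step rather than conditioning on the sample. For any fixed sign pattern $\vect{\sigma}$, the matrix $\sum_i\sigma_i\vect{x}_i\vect{x}_i^\top$ has the law of a difference $\mat{S}_+-\mat{S}_-$ of two \emph{independent} Wishart matrices with $n_+$ and $n_-=n-n_+$ samples. Writing $\mat{S}_\pm-n_\pm\mat{I}$ and using the classical estimates $\E\norm{\mat{S}_m-m\mat{I}}_2\le C\left(\sqrt{md}+d\right)$ and $\E_{\vect{\sigma}}\abs{n_+-n_-}\le\sqrt{n}$, Fubini gives
\begin{align*}
\E_{\mat{X}}\left[\E_{\vect{\sigma}}\norm{\sum_{i=1}^n\sigma_i\vect{x}_i\vect{x}_i^\top}_2\right]\le C\left(\sqrt{nd}+d+\sqrt{n}\right)\le C'\sqrt{nd}
\end{align*}
whenever $n\ge d$, and standard Gaussian concentration upgrades this to a high-probability statement over the sample. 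This yields $R_S\left(\Sigma\left(M\right)\right)\le CM^2\sqrt{d/n}$ exactly as the corollary claims, uses the hypothesis $n\ge d\log d$ only through $n\ge d$, and replaces both of your unproven ingredients with classical facts about Wishart matrices.
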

Again, combining Theorem~\ref{thm:classical_generalization} and Corollary~\ref{cor:gaussian}  we obtain the following generalization bound for Gaussian input distribution

\begin{thm} \label{thm:generalization_Gaussian}
Under the same assumptions of Theorem~\ref{thm:classical_generalization} and Corollary \ref{cor:gaussian}, we have 
\begin{align*}
L_{te}\left(\mat{W}\right) - L_{tr}\left(\mat{W}\right) \le CLM^2\sqrt{\frac{d}{n}}. 
\end{align*} for some absolute constant $C>0$.

\end{thm}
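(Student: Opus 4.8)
The plan is to directly chain the two ingredients already established: the abstract Rademacher generalization bound of Theorem~\ref{thm:classical_generalization} and the distribution-specific Rademacher complexity estimate of Corollary~\ref{cor:gaussian}. First I would invoke Theorem~\ref{thm:classical_generalization}. Since the loss $\ell(\cdot,\cdot)$ is assumed $L$-Lipschitz in its first argument and we restrict attention to $\mat{W} \in \Sigma(M)$, on a high-probability event $\mathcal{E}_1$ over the draw of the sample $S$ we have, uniformly over $\mat{W} \in \Sigma(M)$,
\[
L_{te}(\mat{W}) - L_{tr}(\mat{W}) \le C_1 L \cdot R_S(\Sigma(M)).
\]

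Next I would control the Rademacher complexity under the Gaussian design. By Corollary~\ref{cor:gaussian}, when $\vect{x}_i \sim N(0,\mat{I})$ and $n \ge d\log d$, there is a high-probability event $\mathcal{E}_2$ on which
\[
R_S(\Sigma(M)) \le C_2 \sqrt{\frac{M^4 d}{n}} = C_2 M^2 \sqrt{\frac{d}{n}},
\]
where the last equality uses $\sqrt{M^4} = M^2$ (valid since $M \ge 0$).

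Finally I would combine these on the intersection $\mathcal{E}_1 \cap \mathcal{E}_2$, which still holds with high probability by a union bound. Substituting the second bound into the first gives
\[
L_{te}(\mat{W}) - L_{tr}(\mat{W}) \le C_1 C_2 L M^2 \sqrt{\frac{d}{n}},
\]
and absorbing the product $C_1 C_2$ into a single absolute constant $C$ yields the claimed inequality, valid for all $\mat{W} \in \Sigma(M)$.

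The proof carries essentially no obstacle of its own: all of the analytical content sits inside Theorem~\ref{thm:classical_generalization} (the symmetrization and contraction argument behind the Rademacher bound) and inside Corollary~\ref{cor:gaussian} (the spectral concentration for the Gaussian covariance that removes the extra factor of $d$ present in the cruder bounded-input estimate~\eqref{eqn:Gaussian_worse}). The only points requiring care are bookkeeping: verifying that the two high-probability events can be intersected without materially degrading the confidence, tracking that the Lipschitz constant $L$ enters linearly while the Frobenius-norm budget $M$ enters quadratically as $M^2$, and confirming that the sample-size requirement $n \ge d\log d$ is simply inherited from Corollary~\ref{cor:gaussian}.
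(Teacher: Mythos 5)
Your proposal is correct and follows essentially the same route as the paper: the paper also obtains Theorem~\ref{thm:generalization_Gaussian} by plugging the Rademacher complexity bound of Corollary~\ref{cor:gaussian} (itself derived from Theorem~\ref{thm:generalization_fourth_order} with $s \le Cnd$) into the uniform bound of Theorem~\ref{thm:classical_generalization}. Your extra bookkeeping about intersecting the two high-probability events via a union bound is a point the paper glosses over, but it is the same argument.
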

Comparing Theorem~\ref{thm:generalization_Gaussian} with generalization bound~\eqref{eqn:Gaussian_worse}, Theorem~\ref{thm:generalization_Gaussian} has an $O(\sqrt{d})$ advantage.
Theorem~\ref{thm:generalization_Gaussian} has the $\sqrt{d/n}$ dependency, which is the usual parametric rate.
Further in practice, number of training samples and input dimension are often of the same order for common datasets and architectures~\citep{zhang2016understanding}.


Corollary \ref{cor:gaussian} is a special case of the more general Theorem \ref{thm:generalization_fourth_order} which only requires a bound on the fourth moment, $\norm{\sum_{i=1}^n (x_i x_i ^\top )^2}_2 \le s$. In general, our theorems suggest if the Frobenius norm of weight matrix $\mat{W}$ is small and the input $\vect{x}$ is sampled from a benign distribution with controlled 4th moments, then we have good generalization.

As a concrete scenario, consider a favorable setting where the true data can be correctly classified by a small network using only $k_0 \ll k$ hidden units. The weights $W^*$ are non-zero only in the first $k_0$ rows and $\max_{j \in[k_0]}\norm{e_j^\top W^* }_2 \le R$. From Theorem \ref{thm:generalization_Gaussian} to reach generalization gap of $\epsilon$, we have sample complexity of $n \ge \frac{1}{\epsilon^2}C^2 L^2  R^4 dk_0 ^2 $, which only depends on the effective number of hidden units $k_0 \ll k$.  The same result can be reached for more general input distributions by using Theorem \ref{thm:generalization_fourth_order} in place of Theorem \ref{thm:generalization_Gaussian}.

\section{Proofs}
\label{sec:proof}
\subsection{Proof of Theorem~\ref{thm:main_optimization} and Theorem~\ref{thm:main_optimization_2}}\label{sec:proof_main_1}
Our proofs of over-parametrization helps optimization build upon existing geometric characterization on matrix factorization.
We first cite a useful Theorem by~\citet{haeffele2014structured}.\footnote{Theorem 2 of \cite{haeffele2014structured} assumes $\mat{W}$ is a local minimum, but scrutinizing its proof, we can see that the assumption can be relaxed to $\nabla L\left(\mat{W}\right) = \mat{0} \text{ and } \nabla^2 L\left(\mat{W}\right) \succcurlyeq \mat{0}$.}
\begin{thm}[Theorem 2 of~\cite{haeffele2014structured} adapted to our setting]\label{lem:bach}
Let $\ell\left(\cdot,\cdot\right)$ be a twice differentiable convex function in the first argument.
If the function $L\left(\mat{W}\right)$  defined in~\eqref{eqn:opt_problem} at a rank-deficient matrix $\mat{W}$ satisfies \begin{align*}
\nabla L\left(\mat{W}\right) = \mat{0} \text{ and } \nabla^2 L\left(\mat{W}\right) \succcurlyeq \mat{0},
\end{align*} then $\mat{W}$ is a global minimum.
\end{thm}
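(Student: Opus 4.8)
The plan is to recognize the objective as the Burer--Monteiro factorization of a convex semidefinite program and then exploit rank deficiency to certify global optimality. The starting observation is that with quadratic activation the network depends on $\mat{W}$ only through the Gram matrix $\mat{M} := \mat{W}^\top\mat{W}$: indeed $f(\vect{x},\mat{W}) = \sum_j (\vect{w}_j^\top\vect{x})^2 = \vect{x}^\top\mat{W}^\top\mat{W}\vect{x} = \langle \mat{M}, \vect{x}\vect{x}^\top\rangle$, and $\norm{\mat{W}}_F^2 = \tr(\mat{M})$. Hence $L(\mat{W}) = g(\mat{M})$ where
\[
g(\mat{M}) = \frac{1}{2n}\sum_{i=1}^n \ell\big(\langle \mat{M}, \vect{x}_i\vect{x}_i^\top\rangle, y_i\big) + \frac{\lambda}{2}\tr(\mat{M})
\]
is a \emph{convex} function on symmetric matrices (the loss term is a convex function composed with the linear map $\mat{M}\mapsto\langle\mat{M},\vect{x}_i\vect{x}_i^\top\rangle$, and the trace term is linear). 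Writing $\mat{G} := \nabla g(\mat{M})$ for its symmetric Euclidean gradient, the factored objective $L(\mat{W}) = g(\mat{W}^\top\mat{W})$ is exactly the rank-$k$ surrogate of the convex program $\min_{\mat{M}\succeq 0} g(\mat{M})$, since every PSD matrix of rank at most $k$ factors as $\mat{W}^\top\mat{W}$ for some $\mat{W}\in\mathbb{R}^{k\times d}$.

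Next I would translate the two hypotheses into conditions on $\mat{G}$. A chain-rule computation gives $\nabla L(\mat{W}) = 2\mat{W}\mat{G}$, so first-order stationarity reads $\mat{W}\mat{G} = \mat{0}$; left-multiplying by $\mat{W}^\top$ yields the complementarity $\mat{M}\mat{G} = \mat{0}$, hence $\langle \mat{M}, \mat{G}\rangle = 0$. For the Hessian, expanding $L(\mat{W}+t\mat{U})$ to second order in $t$ gives, with $\mat{D} := \mat{U}^\top\mat{W} + \mat{W}^\top\mat{U}$,
\[
\vectorize{\mat{U}}^\top \nabla^2 L(\mat{W})\, \vectorize{\mat{U}} = \big\langle \nabla^2 g(\mat{M})[\mat{D}], \mat{D}\big\rangle + 2\langle \mat{G}, \mat{U}^\top\mat{U}\rangle,
\]
where the first term is nonnegative because $g$ is convex. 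The crucial step uses rank deficiency: since $\rank(\mat{W}) < k$, the left null-space $\nullspace(\mat{W})$ is nontrivial, so I pick $\vect{p}\neq\vect{0}$ with $\vect{p}^\top\mat{W} = \mat{0}$ and test the Hessian along $\mat{U} = \vect{p}\vect{z}^\top$ for arbitrary $\vect{z}\in\mathbb{R}^d$. This choice forces $\mat{D} = \vect{z}(\vect{p}^\top\mat{W}) + (\mat{W}^\top\vect{p})\vect{z}^\top = \mat{0}$, annihilating the convex-curvature term, so $\nabla^2 L(\mat{W})\succeq 0$ collapses to $\norm{\vect{p}}_2^2\,\vect{z}^\top\mat{G}\vect{z}\ge 0$ for all $\vect{z}$, i.e.\ $\mat{G}\succeq 0$.

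Finally I would invoke convex duality. Minimizing a convex $g$ over the self-dual PSD cone, a point $\mat{M}\succeq 0$ is a global minimizer if and only if $\mat{G} = \nabla g(\mat{M})\succeq 0$ and $\langle \mat{M},\mat{G}\rangle = 0$; both are now established, so $\mat{M} = \mat{W}^\top\mat{W}$ globally minimizes $g$ over the PSD cone. Since $L(\mat{W}') = g(\mat{W}'^\top\mat{W}')\ge \min_{\mat{M}'\succeq 0} g(\mat{M}') = g(\mat{M}) = L(\mat{W})$ for every $\mat{W}'\in\mathbb{R}^{k\times d}$, the point $\mat{W}$ is a global minimum of $L$, as claimed. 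The main obstacle is the middle step: one must produce a perturbation direction that respects the second-order hypothesis while decoupling the (convex) curvature of $g$ from the gauge term $\langle\mat{G},\mat{U}^\top\mat{U}\rangle$. Rank deficiency is precisely what supplies such a direction, upgrading a weak second-order certificate into the strong conclusion $\mat{G}\succeq 0$, after which convex optimality finishes the argument.
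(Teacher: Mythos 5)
Your proof is correct. Note that the paper itself never proves this statement: it is imported from \cite{haeffele2014structured}, with only a footnote asserting that the local-minimality hypothesis of the original theorem can be relaxed to $\nabla L\left(\mat{W}\right) = \mat{0}$ and $\nabla^2 L\left(\mat{W}\right) \succcurlyeq \mat{0}$. Your argument is precisely the proof that footnote alludes to, and it is the standard Burer--Monteiro-style certificate used in that line of work: lift to the convex function $g\left(\mat{M}\right)$ on the PSD cone, obtain complementary slackness $\langle \mat{M}, \mat{G}\rangle = 0$ from stationarity, test the Hessian along $\mat{U} = \vect{p}\vect{z}^\top$ with $\vect{p}^\top \mat{W} = \mat{0}$ (which annihilates the $\nabla^2 g$ term since $\mat{D} = \mat{0}$) to conclude $\mat{G} \succcurlyeq \mat{0}$, and finish by first-order optimality of a convex function over the self-dual cone. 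So your route coincides with the one the paper relies on implicitly; the added value of your write-up is that it makes the footnote's claim rigorous and shows exactly where rank deficiency (a nontrivial left null space of $\mat{W}$) enters.
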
 

\begin{proof}[Proof of Theorem~\ref{thm:main_optimization}]
We prove Property~\ref{prop:local_global} and Property~\ref{prop:saddle_strict} simultaneously by showing if a $\mat{W}$ satisfy\begin{align*}
\nabla L\left(\mat{W}\right) = \mat{0} \text{ and } \nabla^2 L\left(\mat{W}\right)\succcurlyeq 0
\end{align*} then it is a global minimum.

If $\rank\left(\mat{W}\right) < d$, we can directly apply Theorem~\ref{lem:bach}.
Thus it remains to  consider the case $\rank\left(\mat{W}\right) = d$.
We first notice that $\nabla L\left(\mat{W}\right) = 0$ is equivalent to
\begin{align}
\mat{W}\left(\frac{1}{n}\sum_{i=1}^{n}\frac{\partial \ell\left(\hat{y}_i,y_i\right)}{\partial \hat{y}_i}\vect{x}_i\vect{x}_i^\top + \lambda \mat{I}\right) = \mat{0} \label{eqn:grad_0_condition}
\end{align} where $\hat{y}_i =f(\vect{x}_i,\mat{W})$.
Since $\rank\left(\mat{W}\right) = d$ and $k \ge d$, we know $\mat{W}$ has a left pseudo-inverse, i.e., there exists $\mat{W}^\dagger$ such that $\mat{W}^\dagger \mat{W} = \mat{I}$.
Multiplying $\mat{W}^\dagger$ on the left in Equation~\eqref{eqn:grad_0_condition}, we have \begin{align}
\frac{1}{n}\sum_{i=1}^{n}\frac{\partial \ell\left(\hat{y}_i,y_i\right)}{\partial \hat{y}_i}\vect{x}_i\vect{x}_i^\top + \lambda \mat{I} = \mat{0}.\label{eqn:k=d,full_rank_condition}
\end{align}
To prove Theorem~\ref{thm:main_optimization}, the key idea is to consider the follow reference optimization problem.
\begin{align}
	\min_{\mat{M}: \mat{M} \in \mathcal{S}^d}L\left(\mat{M}\right) = \frac{1}{n}\sum_{i=1}^{n}\ell\left(\vect{x}_i^\top \mat{M}\vect{x}_i,y_i\right) + \lambda \norm{\mat{M}}_*. \label{eqn:opt_M}
\end{align}
Problem~\eqref{eqn:opt_M} is a convex optimization problem in $\mat{M}$ and has the same global minimum as the original problem. 
Now we denote $\tilde{y}_i = \vect{x}_i^\top \mat{M} \vect{x}_i$.
Since this is a convex function, the first-order optimality condition for global optimality is \[
\mat{0 \in }\frac{1}{n}\sum_{i=1}^{n}\frac{\partial \ell\left(\tilde{y}_i,y_i\right)}{\partial \tilde{y}_i}\vect{x}_i\vect{x}_i^\top + \lambda \partial \norm{\mat{M}}_*,
\]
$\mat{M}$ is a global minimum.

Using Equation~\eqref{eqn:k=d,full_rank_condition}, we know $\mat{M} = \mat{W}^\top\mat{W}$ achieves the global minimum in Problem~\eqref{eqn:opt_M}.
The proof is thus complete. 
\end{proof}

\subsection{Proof of Theorem~\ref{thm:main_optimization_2}}\label{sec:proof_main_2}
\begin{proof}
We first prove $L_{\mat{C}}\left(\mat{W}\right)$ satisfies Property~\ref{prop:local_global} and Property~\ref{prop:saddle_strict}.
Similar to the proof of Theorem~\ref{thm:main_optimization}, we prove these two properties simultaneously by showing if a $\mat{W}$ satisfy\begin{align}
\nabla L_{\mat{C}}\left(\mat{W}\right) = \mat{0} \text{ and } \nabla^2 L_{\mat{C}}\left(\mat{W}\right)\succcurlyeq 0 \label{eqn:first_second_order_condition}
\end{align} then it is a global minimum.
Because of Theorem~\ref{lem:bach}, we only need to show that if $\mat{W}$ satisfy condition~\eqref{eqn:first_second_order_condition}, it is rank deficient, i.e. $\rank \left(\mat{W}\right) < k$.

For the gradient condition, we have \begin{align*}
\mat{W}\left(\frac{1}{n}\sum_{i=1}^{n}\frac{\partial \ell \left(\hat{y}_i,y_i\right)}{\partial \hat{y}_i}\vect{x}_i\vect{x}_i^\top + \lambda \mat{I}\right) + \mat{W}\mat{C} = 0.
\end{align*}
For simplicity we denote $v_i = \frac{\partial \ell\left(\hat{y}_i,y_i\right)}{\partial \hat{y}_i}$ where $\hat{y}_i = \vect{x}_i^\top\mat{W}^\top\mat{W}\vect{x}_i$ and $\mat{S}\left(\vect{v}\right) = \sum_{i=1}^{n} v_i\vect{x}_i\vect{x}_i^\top$.
Using the first order condition we know $\mat{W}$ is in the null space of $\left(\mat{S}\left(\vect{v}\right)+\lambda\mat{I}+\mat{C}\right)$.
Thus, we can bound the rank of $\mat{W}$ by\begin{align*}
	\rank\left(\mat{W}\right) \le &\dim \nullspace \left(\mat{S}\left(\vect{v}\right)+\lambda\mat{I}+\mat{C}\right) \\
	\le &  \max_{\vect{v}} \dim\nullspace\left(\mat{S}\left(\vect{v}\right)+\lambda\mat{I}+\mat{C}\right).
\end{align*}
We prove by contradiction.
Assume $\rank\left(\mat{W}\right)  \ge k$, we must have \begin{align*}
k \le \max_{\vect{v}}\nullspace\left(\mat{S}\left(\vect{v}\right)+\lambda\mat{I}+\mat{C}\right).
\end{align*}
Now define $\mat{M} = \mat{S}\left(\vect{v}^*\right) + \lambda \mat{I} + \mat{C}$ with \[\vect{v}^* = \argmax \dim \nullspace\left(\mat{S}\left(\vect{v}\right) + \lambda\mat{I} + \mat{C}\right).\]
Thus we have following conditions \begin{align*}
	\mat{C} = &\mat{M} - \mat{S}\left(\vect{v}^*\right) -\lambda\mat{I}, \\
	\dim \nullspace\left(\mat{M}\right) \ge &k.
\end{align*}
The key idea is to use these two conditions to upper bound the dimension of $\mat{C}$.
To this end, we first define the set\begin{align*}
\mathcal{B}_\ell = \left\{
\mat{A}: \mat{A} = \mat{M} - \mat{S}\left(\vect{v}\right) - \lambda \mat{I}, \mat{M} \in \mathcal{S}^d, \vect{v} \in \mathbb{R}^n, \right.\\\
\left. \dim \nullspace\left(\mat{M}-\lambda \mat{I}\right) = \ell
\right\}.
\end{align*}
Note that the dimension of the manifold $\mathcal{B}_\ell$ is \begin{align*}
\left(\frac{d(d+1)}{2}-\frac{\ell\left(\ell+1\right)}{2}\right) + n
\end{align*}
where the first term is the dimension of $d \times d$ matrices, the second term is the dimension of the null space and the last term is dimension of $\range (\mat{S}\left(\vect{v}\right))$ for $\vect{v} \in \mathbb{R}^n$, which is upper bounded by $n$.

Next note that $\mathcal{B}_{\ell_1} \subset \mathcal{B}_{\ell_2}$ for $\ell_1 \ge \ell_2$.
Therefore, we can compute the dimension of the union \begin{align*}
\dim\left(\cup_{\ell=k}^d \mathcal{B}_\ell\right) = \left(\frac{d(d+1)}{2}-\frac{k(k+1)}{2}\right) + n.
\end{align*}
Note because we assume $\frac{k(k+1)}{2} > n$, we have $\dim\left(\cup_{\ell=k}^d \mathcal{B}_\ell\right) < \frac{d(d+1)}{2}$.
However, recall $\mat{C} \in \cup_{\ell=k}^d \mathcal{B}_\ell$ by definition, so we have $\mat{C}$ is sampled from a low-dimensional manifold which has Lebesuge measure $0$.
Since we sample $\mat{C}$ from a distribution that is absolute continuous with respect to the Lebesgue measure, the event $\left\{\mat{C} \in \cup_{\ell=k}^d \mathcal{B}_\ell\right\}$ happens with probability $0$.
Therefore, with probability $1$, $\rank\left(\mat{W}\right) < k$.
The proof of the first part of Theorem~\ref{thm:main_optimization_2} is complete.

For the second part.
Let $\widehat{\mat{W}} = \argmin_{\mat{W}} L_{\mat{C}}\left(\mat{W}\right)$ and $\mat{W}^{*} = \argmin_{\mat{W}}\left(\mat{W}\right)$.
Therefore we have \begin{align*}
&L\left(\widehat{\mat{W}} \right) + \langle \mat{C},\widehat{\mat{W}} ^\top\widehat{\mat{W}} \rangle \\
\le & L\left(\mat{W}^{*}\right) + \langle \mat{C},\left(\mat{W}^{*}\right)^\top\mat{W}^{*} \rangle \\
\le & L\left(\mat{W}^*\right) + \delta\norm{\mat{W}^*}_F^2.
\end{align*}
Note because $\mat{C}$ and $\widehat{\mat{W}}^\top\widehat{\mat{W}} $ are both positive semidefinite, we have $\langle \mat{C},\widehat{\mat{W}}^\top\widehat{\mat{W}} \rangle \ge 0$.
Thus $L\left(\widehat{\mat{W}}\right) \le  L\left(\mat{W}^*\right) + \delta\norm{\mat{W}^*}_F^2$.
\end{proof}

\subsection{Proof of Theorem~\ref{thm:bounded_x_rad} and Theorem~\ref{thm:generalization_Gaussian}}\label{sec:proof_generealization}
Our proof is inspired by~\citep{srebro2005rank} which exploits the structure of nuclear norm bounded space.
We first prove a general Theorem that only depends on the property of the fourth-moment of input random variables.
\begin{thm}\label{thm:generalization_fourth_order}
Suppose the input random variable satisfies $\norm{\sum_{i=1}^{n}\left(\vect{x}_i\vect{x}_i^\top\right)^2}_2 \le s$.
Then the Rademacher complexity of $\Sigma\left(M\right)$ is bounded by \begin{align*}
R_S\left(\Sigma\left(M\right)\right) \le \frac{\sqrt{2M^4s\log d}}{n}.
\end{align*}
\end{thm}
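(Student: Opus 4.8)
The plan is to exploit the fact that with quadratic activation the network output is \emph{linear} in the second-moment matrix of $\mat{W}$. Writing $\mat{Z} = \mat{W}^\top\mat{W} \in \mathcal{S}^d$, I would first rewrite each output as
\[
f(\vect{x}_i,\mat{W}) = \sum_{j=1}^k (\vect{w}_j^\top\vect{x}_i)^2 = \langle \mat{W}^\top\mat{W}, \vect{x}_i\vect{x}_i^\top\rangle = \langle \mat{Z}, \vect{x}_i\vect{x}_i^\top\rangle,
\]
so the whole class is linear in $\mat{Z}$. The weight-decay constraint $\norm{\mat{W}}_F \le M$ becomes $\norm{\mat{Z}}_* = \tr(\mat{Z}) \le M^2$, since $\mat{Z}$ is positive semidefinite and its nuclear norm equals its trace. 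Relaxing the range of $\mat{Z}$ from positive semidefinite matrices to \emph{all} symmetric matrices of nuclear norm at most $M^2$ only enlarges the supremum, yielding
\[
R_S(\Sigma(M)) \le \frac{1}{n}\,\E_{\vect{\sigma}}\sup_{\norm{\mat{Z}}_* \le M^2}\left\langle \mat{Z}, \sum_{i=1}^n \sigma_i\vect{x}_i\vect{x}_i^\top\right\rangle.
\]

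Next, I would invoke the duality between the nuclear norm and the spectral norm: for any symmetric $\mat{A}$ one has $\sup_{\norm{\mat{Z}}_*\le M^2}\langle \mat{Z},\mat{A}\rangle = M^2\norm{\mat{A}}_2$. Applying this with $\mat{A} = \sum_{i=1}^n \sigma_i \vect{x}_i\vect{x}_i^\top$ collapses the supremum and leaves
\[
R_S(\Sigma(M)) \le \frac{M^2}{n}\,\E_{\vect{\sigma}}\norm{\sum_{i=1}^n \sigma_i \vect{x}_i\vect{x}_i^\top}_2,
\]
reducing the whole problem to controlling the expected spectral norm of a Rademacher sum of the fixed symmetric matrices $\vect{x}_i\vect{x}_i^\top$.

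Finally, I would apply a matrix concentration inequality for Rademacher series (the noncommutative Khintchine / matrix Rademacher series bound): for fixed symmetric $d\times d$ matrices $\mat{A}_i$, $\E_{\vect{\sigma}}\norm{\sum_i\sigma_i\mat{A}_i}_2 \le \sqrt{2\log d}\,\norm{\sum_i \mat{A}_i^2}_2^{1/2}$. Taking $\mat{A}_i = \vect{x}_i\vect{x}_i^\top$ gives $\sum_i \mat{A}_i^2 = \sum_i (\vect{x}_i\vect{x}_i^\top)^2$, whose spectral norm is at most $s$ by hypothesis, so the expected spectral norm is bounded by $\sqrt{2s\log d}$. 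Substituting yields $R_S(\Sigma(M)) \le M^2\sqrt{2s\log d}/n = \sqrt{2M^4 s\log d}/n$, as claimed. The main obstacle is precisely this last step: the reformulation and duality arguments are routine, but the $\sqrt{\log d}$ dimensional factor and the sharp constant $\sqrt{2}$ come entirely from the matrix concentration inequality, so the crux is invoking the correct form of matrix Khintchine and confirming its constant is stated in terms of the ambient dimension $d$ (rather than $2d$, which would only change absorbed constants).
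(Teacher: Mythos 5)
Your proposal is correct and follows essentially the same route as the paper: both reduce the supremum over the weight class to the expected spectral norm of the Rademacher series $\sum_{i}\sigma_i\vect{x}_i\vect{x}_i^\top$ and then invoke the matrix Rademacher/Khintchine bound (Theorem 4.6.1 of Tropp) with the fourth-moment hypothesis. The only cosmetic difference is that you collapse the supremum via nuclear-norm/spectral-norm duality, whereas the paper phrases the identical step as passing to the convex hull of rank-one, spectral-norm-one matrices and computing the supremum over those extreme points explicitly.
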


\begin{proof}
For a given set of inputs $S = \left\{\vect{x}_i\right\}_{i=1}^n$ in our context, we can write Rademacher complexity as
\begin{align*}R_{S}\left(\Sigma\left(M\right)\right) = \frac{1}{n}\expect_{\vect{\sigma}}\left[\sup_{\mat{W} \in \Sigma\left(M\right)}\sum_{i=1}^{n}\sigma_i\vect{x}_i^\top \mat{W}^\top\mat{W}\vect{x}_i\right] 
\end{align*}
Since Rademacher complexity does not change when taking convex combinations, we can first bound Rademacher complexity of  the  class of rank-$1$ matrices with spectral norm bounded by $1$ and then take convex hull and scale by $M$.
Note for $\mat{W} \in \Sigma_1\left(1\right)$, we can write $\mat{W} = \vect{v}\vect{w}^\top$ with $\norm{\vect{w}}_2 \le 1$ and $\norm{\vect{v}}_2 = 1$.
Using this expression, we can obtain an explicit formula of Rademacher complexity.
\begin{align*}
&R_{S}\left(\Sigma_1\left(1\right)\right)\\
 = &\frac{1}{n}\expect_{\vect{\sigma}}\left[\sup_{\mat{W} \in \Sigma_1\left(1\right)}\sum_{i=1}^{n}\sigma_i\vect{x}_i^\top \mat{W}^\top\mat{W}\vect{x}_i\right] \\
 = &\frac{1}{n}\expect_{\vect{\sigma}}\left[
 \sup_{\mat{W} \in \Sigma_1\left(1\right)}
 \sum_{i=1}^{n}\sigma_i\vect{x}_i^\top \left(\mat{w}^\top\vect{v}\right)^\top\left(\vect{v}\vect{w}^\top\right)\vect{x}_i
 \right] \\
 = &\frac{1}{n}\expect_{\vect{\sigma}}\left[\sup_{\vect{w}: \norm{\vect{w}}_2 \le 1 }\sum_{i=1}^{n}\sigma_i\vect{x}_i^\top \vect{w}\vect{w}^\top\vect{x}_i\right]\\
 = & \frac{1}{n}\expect_{\vect{\sigma}}\left[\sup_{\vect{w}: \norm{\vect{w}}_2 \le 1 }\sum_{i=1}^{n}\sigma_i\vect{w}^\top\vect{x}_i^\top \vect{x}_i\vect{w}\right] \\
 = &\frac{1}{n}\expect_{\vect{\sigma}}\left[\norm{\sum_{i=1}^{n}\sigma_i\vect{x}_i\vect{x}_i^\top}_{2}\right].
\end{align*}
Now, to bound  $\expect_{\vect{\sigma}}\left[\norm{\sum_{i=1}^{n}\sigma_i\vect{x}_i\vect{x}_i^\top}_{2}\right]$, we can use the results from random matrix theory on Rademacher series.
Recall that we assume \begin{align*}
\norm{\sum_{i=1}^{n}\left(\vect{x}_i\vect{x}_i^\top\right)^2}_2 \le s
\end{align*}
 and notice that \[\expect_{\sigma}\left[\sum_{i=1}^{n}\sigma_i\vect{x}_i\vect{x}_i^\top\right]=\mat{0}.\]
Applying Rademacher matrix series expectation bound (Theorem 4.6.1 of~\cite{tropp2015introduction}), we have \begin{align*}
\expect_{\vect{\sigma}}\left[\norm{\sum_{i=1}^{n}\sigma_i\vect{x}_i\vect{x}_i^\top}_{2}\right] \le &\sqrt{2 \norm{\sum_{i=1}^{n}\left(\vect{x}_i\vect{x}_i^\top\right)^2}_2\log d}\\
\le & \sqrt{2s\log d}. 
\end{align*}
Now taking the convex hull and, scaling by $M$ we obtain the desired result.
\end{proof}

With Theorem~\ref{thm:generalization_fourth_order} at hand, for different distributions, we only need to bound $\norm{\sum_{i=1}^{n}\left(\vect{x}_i\vect{x}_i^\top\right)^2}_2$.

\begin{proof}[Proof of Theorem~\ref{thm:generalization_bounded}]
Since we assume $\norm{\vect{x}_i}_2 \le b$, we directly have \begin{align*}
\norm{\sum_{i=1}^{n}\left(\vect{x}_i\vect{x}_i^\top\right)^2}_2 \le & \sum_{i=1}^{n}\norm{\left(\vect{x}_i\vect{x}_i^\top\right)^2}_2 \\
=& \sum_{i=1}^{n} \norm{\vect{x_i}}_2^2\norm{\vect{x}_i\vect{x}_i^\top}_2  \\
\le &nb^4.
\end{align*}
Plugging this bound in Theorem~\ref{thm:generalization_fourth_order} we obtain the desired inequality.
\end{proof}

\begin{proof}[Proof of Corollary \ref{cor:gaussian} and Theorem~\ref{thm:generalization_Gaussian}]
	To prove Corollary \ref{cor:gaussian}, we use Theorem \ref{thm:generalization_fourth_order} and Lemma 4.7 in \cite{soltanolkotabi2017theoretical} to upper bound $s= \norm{\sum_{i=1}^n \norm{x_i}^2 x_i x_i ^\top } $. By letting $A=I$ in Lemma 4.7, we find that 
	$$
	s \le C nd ,
	$$
	with probability at least $1- \frac{C}{d}$. This completes the proof of Corollary \ref{cor:gaussian}.
	
	Using this bound in Theorem \ref{thm:generalization_fourth_order} comletes the proof of Theorem \ref{thm:generalization_Gaussian}.
\end{proof}

\section{Conclusion and Future Works}
\label{sec:con}
In this paper we provided new theoretical results on over-parameterized neural networks.
Using smoothed analysis, we showed as long as the number of hidden nodes is bigger than the input dimension \emph{or} square root of the number of training data, the loss surface has benign properties that enable local search algorithms to find global minima.
We further use the theory of Rademacher complexity to show the learned neural can generalize well.

Our next step is consider neural networks with other activation functions and how over-parametrization allows for efficient local-search algorithms to find near global minimzers. Another interesting direction to extend our results to deeper model.

\section{Acknowledgment}
\label{sec:ack}
S.S.D. was supported by NSF grant IIS1563887, AFRL grant FA8750-17-2-0212 and DARPA D17AP00001.
J.D.L. acknowledges support of the ARO under MURI Award W911NF-11-1-0303.  This is part of the collaboration between US DOD, UK MOD and UK Engineering and Physical Research Council (EPSRC) under the Multidisciplinary University Research Initiative.

\bibliography{simonduref}
\bibliographystyle{plainnat}
\newpage
\appendix

\end{document}